\newlength{\commentindent}
\renewcommand{\algorithmiccomment}[1]{\unskip\hfill\makebox[\commentindent][l]{//~#1}\par}
\LetLtxMacro{\oldalgorithmic}{\algorithmic}
\renewcommand{\algorithmic}[1][0]{%
  \oldalgorithmic[#1]%
  \renewcommand{\ALC@com}[1]{%
    \ifnum\pdfstrcmp{##1}{default}=0\else\algorithmiccomment{##1}\fi}%
}
\setlist{itemsep=-4pt, topsep=0pt}  
\newcommand\refine{\mathit{Refine}}
\newcommand\Zub{Z^{\mathrm{UB}}}
\newcommand\Zubunder{\underline{Z}^{\mathrm{UB}}}
\newcommand\method{\textsc{AdaPart}\xspace} 
\newtheorem{prop}{Proposition}
\newtheorem{definition}{Definition}
\newcounter{countitems}
\newcounter{nextitemizecount}
\newcommand{\setupcountitems}{%
  \stepcounter{nextitemizecount}%
  \setcounter{countitems}{0}%
  \preto\item{\stepcounter{countitems}}%
}
\newcommand{\computecountitems}{%
  \edef\@currentlabel{\number\c@countitems}%
  \label{countitems@\number\numexpr\value{nextitemizecount}-1\relax}%
}
\newcommand{\nextitemizecount}{%
  \getrefnumber{countitems@\number\c@nextitemizecount}%
}
\newcommand{\previtemizecount}{%
  \getrefnumber{countitems@\number\numexpr\value{nextitemizecount}-1\relax}%
}
\computecountitems\ifnumcomp{\previtemizecount}{>}{3}{\end{multicols}}{}}
\newcommand\jonathan[1]{\textcolor{blue}{[JK: #1]}}
\newcommand\tri[1]{\textcolor{green}{[Tri: #1]}}
\newcommand\se[1]{\textcolor{red}{[SE: #1]}}
\newcommand\s[1]{\textcolor{red}{[SE: #1]}}
\newcommand\as[1]{\textcolor{magenta}{[AS: #1]}}
\renewcommand\jonathan[1]{} 
\renewcommand\s[1]{} 
\renewcommand\tri[1]{} 
\renewcommand\se[1]{} 
\renewcommand\as[1]{} 
\title{Approximating the Permanent by\\ Sampling from Adaptive Partitions}
\author{Jonathan Kuck\textsuperscript{1}, \ \  Tri Dao\textsuperscript{1},\ \ Hamid Rezatofighi\textsuperscript{1},\ \  Ashish Sabharwal\textsuperscript{2}, \normalfont{and} \bf{Stefano Ermon}\textsuperscript{1}\\
  \textsuperscript{1}Stanford University\ \ \ \ \
  \textsuperscript{2}Allen Institute for Artificial Intelligence\\
{\tt\small \{kuck,trid,hamidrt,ermon\}@stanford.edu, ashishs@allenai.org}
}
\begin{document}

\maketitle

\begin{abstract}
Computing the permanent of a non-negative matrix is a core problem with practical applications ranging from target tracking to statistical thermodynamics. However, this problem is also \#P-complete, which leaves little hope for finding an exact solution that can be computed efficiently.  While the problem admits a fully polynomial randomized approximation scheme, this method has seen little use because it is both inefficient in practice and difficult to implement. We present \method, a simple and efficient method for drawing exact samples from an unnormalized distribution. Using \method, we show how to construct tight bounds on the permanent which hold with high probability, with guaranteed polynomial runtime for dense matrices. We find that \method can provide empirical speedups exceeding 25x over prior sampling methods on matrices that are challenging for variational based approaches. Finally, in the context of multi-target tracking, exact sampling from the distribution defined by the matrix permanent allows us to use the optimal proposal distribution during particle filtering. Using \method, we show that this leads to improved tracking performance using an order of magnitude fewer samples.
\end{abstract}

\section{Introduction}

The permanent of a square, non-negative matrix $A$ is a quantity with natural graph theoretic interpretations.  If $A$ is interpreted as the adjacency matrix of a directed graph, the permanent corresponds to the sum of weights of its cycle covers. If the graph is bipartite, it corresponds to the sum of weights of its perfect matchings. The permanent has many applications in computer science and beyond.  In target tracking applications \cite{uhlmann2004matrix,morelande2009joint,oh2009markov,hamid2015joint}, it is used to calculate the marginal probability of  measurements-target associations. In general computer science, it is widely used in graph theory and network science. The permanent also arises in statistical thermodynamics \cite{beichl1999approximating}.

Unfortunately, computing the permanent of a matrix is believed to be intractable in the worst-case, as the problem has been formally shown to be \#P-complete~\cite{valiant1979complexity}.
Surprisingly, a fully polynomial randomized approximation scheme (FPRAS) exists, meaning that it is theoretically possible to accurately approximate the permanent in polynomial time. However, this algorithm is not practical: it is difficult to implement and it scales as $O(n^7 \log^4 n)$.  Ignoring coefficients, this is no better than exact calculation until matrices of size $40 \text{x} 40$, which takes days to compute on a modern laptop.


 		

The problems of sampling from an unnormalized distribution and calculating the distribution's normalization constant (or partition function) are closely related and interreducible. 
An efficient solution to one problem leads to an efficient solution to the other~\cite{jerrum1986random,jerrum1996markov}.  Computing the permanent of a matrix is a special instance of computing the partition function of an unnormalized distribution~\cite{wainwright2008graphical}.  In this case the distribution is over $n!$ permutations, the matrix defines a weight for each permutation, and the permanent is the sum of these weights.

\subsection{Contributions} First, we present \method, a novel method for \textit{drawing exact samples from an unnormalized distribution using any algorithm that upper bounds its partition function.}  We use these samples to estimate and bound the partition function with high probability.  This is a generalization of prior work~\cite{huber2006exact,law2009approximately}, which showed that a specific bound on the matrix permanent nests, or satisfies a Matryoshka doll like property where the bound recursively fits within itself, for a fixed partitioning of the state space.  Our novelty lies in adaptively choosing a partitioning of the state space, which (a) is suited to the particular distribution under consideration, and (b) allows us to use any upper bound or combination of bounds on the partition function, rather than one that can be proven \emph{a priori} to nest according to a fixed partitioning.

Second, we provide a complete instantiation of \method for sampling permutations with weights defined by a matrix, and correspondingly computing the permanent of that matrix. To this end, we identify and use an upper bound on the permanent with several desirable properties, including being computable in polynomial time and being tighter than the best known bound that provably nests.

Third, we empirically demonstrate that \method is both computationally efficient and practical for approximating the permanent of a variety of matrices, both randomly generated and from real world applications.  We find that \method can be over 25x faster compared to prior work on sampling from and approximating the permanent.  In the context of multi-target tracking, \method facilitates sampling from the optimal proposal distribution during particle filtering, which improves multi-target tracking performance while reducing the number of samples by an order of magnitude.

\section{Background}

The \emph{permanent} of an $n \times n$ non-negative matrix $A$ is defined as $\text{per}(A) = \sum_{\sigma \in S_n} \prod_{j=1}^n A(j, \sigma(j))$, where the sum is over all permutations $\sigma$ of $\{1,2,\dots,n\}$ and $S_n$ denotes the corresponding symmetric group.  Let us define the weight function, or unnormalized probability, of a permutation as $w(\sigma) = \prod_{j=1}^n A(j, \sigma(j))$.  The permanent can then be written as $\text{per}(A) = \sum_{\sigma \in S_n} w(\sigma)$, which is the partition function (normalization constant) of $w$, also denoted $Z_w$.

We are interested in sampling from the corresponding probability distribution over permutations $p(\sigma) = \frac{w(\sigma)}{\sum_{\sigma' \in S_n} w(\sigma')}$, or more generally from any unnormalized
distribution where the exact partition function is unknown. Instead, we will assume access to a function that \emph{upper bounds} the partition function, for instance an upper bound on the permanent. By verifying (at runtime) that this upper bound satisfies a natural `nesting' property w.r.t.\ a partition of the permutations, we will be able to guarantee exact samples from the underlying distribution.  Note that verification is critical since the `nesting' property does not hold for upper bounds in general.

In the next few sections, we will consider the general case of any non-negative weight function $w$ over $N$ states (i.e., $w : \mathcal{S} \rightarrow \mathbb{R}_{\geq 0}, |\mathcal{S}| = N$) and its partition function $Z_w$, rather than specifically discussing weighted permutations of a matrix and its permanent.  This is to simplify the discussion and present it in a general form. We will return to the specific case of the permanent later on.

\subsection{Nesting Bounds}

\citet{huber2006exact} and \citet{law2009approximately} have noted that upper bounds on the partition function that `nest' can be used to draw exact samples from a distribution defined by an arbitrary, non-negative weight function. For their method to work, the upper bound must nest according to some fixed partitioning $\mathcal{T}$ of the weight function's state space, as formalized in Definition~\ref{def:partition_tree}.  In Definition~\ref{nesting_UB}, we state the properties that must hold for an upper bound to `nest' according to the partitioning $\mathcal{T}$.

\begin{definition}[Partition Tree]
\label{def:partition_tree}
Let $\mathcal{S}$ denote a finite state space. A \emph{partition tree} $\mathcal{T}$ for $\mathcal{S}$ is a tree where each node is associated with a non-empty subset of $\mathcal{S}$ such that:
\begin{enumerate}
    \item The root of $\mathcal{T}$ is associated with $\mathcal{S}$.
    \item If $\mathcal{S} = \{a\}$, the tree $\{a\}$ formed by a single node is a partition tree for $\mathcal{S}$.
    \item Let $v_1, \cdots, v_k$ be the children of the root node of $\mathcal{T}$, and $S_1, \cdots, S_k$ be their associated subsets of $\mathcal{S}$. $\mathcal{T}$ is a partition tree if $S_i, S_j$ are pairwise disjoint, $\cup_i S_i = \mathcal{S}$, and for each $\ell$ the subtree rooted at $v_\ell$ is a partition tree for $S_\ell$.
\end{enumerate}
\end{definition}

\begin{definition}[Nesting Bounds]
\label{nesting_UB}
Let $w:\mathcal{S} \rightarrow \mathbb{R}_{\geq 0}$ be a non-negative weight function with partition function $Z_w$. 
%
%
%
%
%
%
Let $\mathcal{T}$ be a partition tree for $\mathcal{S}$ and let $\mathcal{S}_\mathcal{T}$ be the set containing the subsets of $\mathcal{S}$ associated with each node in $\mathcal{T}$.
The function $\Zub_w(S) : \mathcal{S}_\mathcal{T} \rightarrow \mathbb{R}_{\geq 0}$ is a \emph{nesting upper bound} for $Z_w$ with respect to $\mathcal{T}$ if:

\begin{enumerate} 
    \item The bound is tight for all single element sets: $\Zub_w(\{i\}) = w(i)$ for all $i \in \mathcal{S}$.\footnote{This requirement can be relaxed by defining a new upper bounding function that returns $w(i)$ for single element sets and the upper bound which violated this condition for multi-element sets.}
    \item The bound `nests' at every internal node $v$ in $\mathcal{T}$.  Let $S$ be the subset of $\mathcal{S}$ associated with $v$.  Let $S_1, \cdots, S_k$ be the subsets associated with the children of $v$ in $\mathcal{T}$. Then the bound `nests' at $v$ if:
    \begin{align}
    \label{eq:nestingbound}
      \sum_{\ell=1}^k \Zub_w (S_\ell) \leq \Zub_w (S).
    \end{align}    
\end{enumerate}
\end{definition}

\subsection{Rejection Sampling with a Fixed Partition}\label{sec:fixed_nesting}

Setting aside the practical difficulty of finding such a bound and partition, suppose we are \emph{given} a fixed partition tree $\mathcal{T}$ and a guarantee that $\Zub_w$ nests according to $\mathcal{T}$. Under these conditions, \citet{law2009approximately} proposed a rejection sampling method to perfectly sample an element, $i \sim \frac{w(i)}{\sum_{j \in \mathcal{S}} w(j)}$, from the normalized weight function (see Algorithm~\ref{alg:nesting_sample} in the Appendix).  Algorithm~\ref{alg:nesting_sample} takes the form of a rejection sampler whose proposal distribution matches the true distribution precisely---except for the addition of \emph{slack} elements with joint probability mass equal to $\Zub_w(\mathcal{S}) - Z_w$.  The algorithm recursively samples a partition of the state space until the sampled partition contains a single element or slack is sampled.  Samples of slack are rejected and the 
procedure is repeated until 
a valid single element is returned.  

According to Proposition~\ref{prop:rejection_sampling_correctness} (see Appendix), 
Algorithm~\ref{alg:nesting_sample} yields exact samples from the desired target distribution. Since it performs rejection sampling using $\Zub_w(S)$ to construct a proposal, its efficiency depends on how close the proposal distribution is to the target distribution.  In our case, this is governed by two factors: (a) the tightness of the (nesting) upper bound, $\Zub_w(S)$, and (b) the tree $\mathcal{T}$ used to partition the state space (in particular, it is desirable for every node in the tree to have a small number of children).

In what follows, we show how to substantially improve upon Algorithm~\ref{alg:nesting_sample} by utilizing tighter bounds (even if they don't nest \emph{a priori}) and iteratively checking for the nesting condition at runtime until it holds.

\begin{algorithm}[t!]
  \caption{\method: Sample from a Weight Function using Adaptive Partitioning}
\footnotesize
  \label{alg:nesting_sample_partition}
  \begin{flushleft}
  \textbf{Inputs:}
  \end{flushleft}

\begin{enumerate}
    \item Non-empty state space, $\mathcal{S}$
    \item Unnormalized weight function, $w: \mathcal{S} \rightarrow \mathbb{R}_{\geq 0}$
    \item Family of upper bounds for $w$,
    $\Zub_w(S):\mathcal{D} \subseteq 2^{\mathcal{S}} \rightarrow \mathbb{R}_{\geq 0}$
    \item Refinement function, $\refine: \mathcal{P} \rightarrow 2^{\mathcal{P}}$, where $\mathcal{P}$ is the set of all partitions of $\mathcal{S}$
\end{enumerate}
  \begin{flushleft}
  \textbf{Output:} A sample $i \in \mathcal{S}$ distributed as $i \sim \frac{w(i)}{\sum_{i \in \mathcal{S}} w(i)}$.
  \end{flushleft}

 \begin{algorithmic}
 \STATE \algorithmicif\ {$\mathcal{S}=\{a\}$} \algorithmicthen\ {Return $a$}
 
 \STATE $P = \{\mathcal{S}\}$ \ \ ; \ \ $\mathit{ub} \leftarrow \Zub_w(\mathcal{S})$
 \REPEAT

   \STATE Choose a subset $S \in P$ to refine:  $\{\{S^i_1, \cdots, S^i_{\ell_i}\}\}_{i=1}^K \leftarrow \refine(S)$
   \FORALL {$i \in \{1, \cdots, K\}$}
     \STATE $\mathit{ub}_i \leftarrow \sum_{j=1}^{\ell_i} \Zub_w(S^i_j)$
   \ENDFOR
   \STATE $j \leftarrow \arg \min_i \mathit{ub}_i$ \ ; \ $P \leftarrow (P \setminus \{S\}) \cup \{S^j_1, \cdots, S^j_{\ell_j}\}$ \ ; \ $\mathit{ub} \leftarrow \mathit{ub} - \Zub_w(S) + \mathit{ub}_j$
 \UNTIL {$\mathit{ub} \leq \Zub_w(\mathcal{S})$}
 \STATE Sample a subset $S_i \in P$ with prob.\
 $\frac{\Zub_w(S_i)}{\Zub_w(\mathcal{S})}$, or sample $\mathit{slack}$ with prob.~$1-\frac{\mathit{ub}}{\Zub_w(\mathcal{S})}$
  \STATE \algorithmicif\ {$S_m \in P$ is sampled} \algorithmicthen\ {Recursively call \method($S_m,w,\Zub_w,\refine$)}
 \STATE \algorithmicelse\ {Reject $\mathit{slack}$ and restart with the call to \method($\mathcal{S},w,\Zub_w,\refine$)}
\normalsize 
 \end{algorithmic}
\end{algorithm}

\section{Adaptive Partitioning}
A key limitation of using the approach in Algorithm~\ref{alg:nesting_sample} is that it is painstaking to prove \emph{a priori} that an upper bound nests for a yet unknown weight function with respect to a complete, fixed partition tree. Indeed, a key contribution of prior work \cite{huber2006exact,law2009approximately} has been to provide a proof that a particular upper bound nests for any weight function $w:\{1,\dots, N\} \rightarrow \mathbb{R}_{\geq 0}$ according to a fixed partition tree whose nodes all have a small number of children.

In contrast, we observe that it is nearly trivial to empirically verify \emph{a posteriori} whether an upper bound respects the nesting property for a particular weight function for a particular partition of a state space; that is, whether the condition in Eq.~(\ref{eq:nestingbound}) holds for a \emph{particular} choice of $S, S_1, \cdots, S_k$ and $\Zub_w$. This corresponds to checking whether the nesting property holds at an individual node of a partition tree. If it doesn't, we can refine the partition and repeat the empirical check. 
We are guaranteed to succeed if we repeat until the partition contains only single elements, but empirically find that the check succeeds after a single call to $\refine$ for the upper bound we use.

The use of this adaptive partitioning strategy provides two notable advantages: (a) it frees us to choose \emph{any} upper bounding method, rather than one that can be proven to nest according to a fixed partition tree; and (b) we can customize---and indeed optimize---our partitioning strategy on a per weight function basis. Together, this leads to significant efficiency gains relative to Algorithm~\ref{alg:nesting_sample}.

Algorithm~\ref{alg:nesting_sample_partition} describes our proposed method, \method. It formalizes the adaptive, iterative partitioning strategy and also specifies how the partition tree can be created on-the-fly during sampling without instantiating unnecessary pieces. 
In contrast to Algorithm~\ref{alg:nesting_sample}, \method does not take a fixed partition tree $\mathcal{T}$ as input. Further, it operates with \emph{any} (not necessarily nesting) upper bounding method for (subsets of) the state space of interest. 

Figure~\ref{fig:adaptive_partitioning} illustrates the difference between our adaptive partitioning strategy and a fixed partitioning strategy.  We represent the entire state space as a 2 dimensional square.  The left square in Figure~\ref{fig:adaptive_partitioning} illustrates a fixed partition strategy, as used by \cite{law2009approximately}.  Regardless of the specific weight function defined over the square, the square is always partitioned with alternating horizontal and vertical splits.  To use this fixed partitioning, an upper bound must be proven to nest for any weight function.  In contrast, our adaptive partitioning strategy is illustrated by the right square in Figure~\ref{fig:adaptive_partitioning}, where we choose horizontal or vertical splits based on the particular weight function.  Note that slack is not shown and that the figure illustrates the complete partition trees.

\begin{figure}[hb] 
\centering
\textbf{Fixed vs. Adaptive Partitioning}\par\medskip
\floatbox[{\capbeside\thisfloatsetup{capbesideposition={right,top},capbesidewidth=5.6cm}}]{figure}[\FBwidth]
{\caption{Binary partitioning of a square in the order black, blue, orange, green.  Left: each subspace is split in half according to a predefined partitioning strategy, alternating vertical and horizontal splits.  Right: each subspace is split in half, but the method of splitting (vertical or horizontal) is chosen adaptively with no predefined order.  This figure represents tight upper bounds without slack.}\label{fig:adaptive_partitioning}}
{\includegraphics[width=8cm]{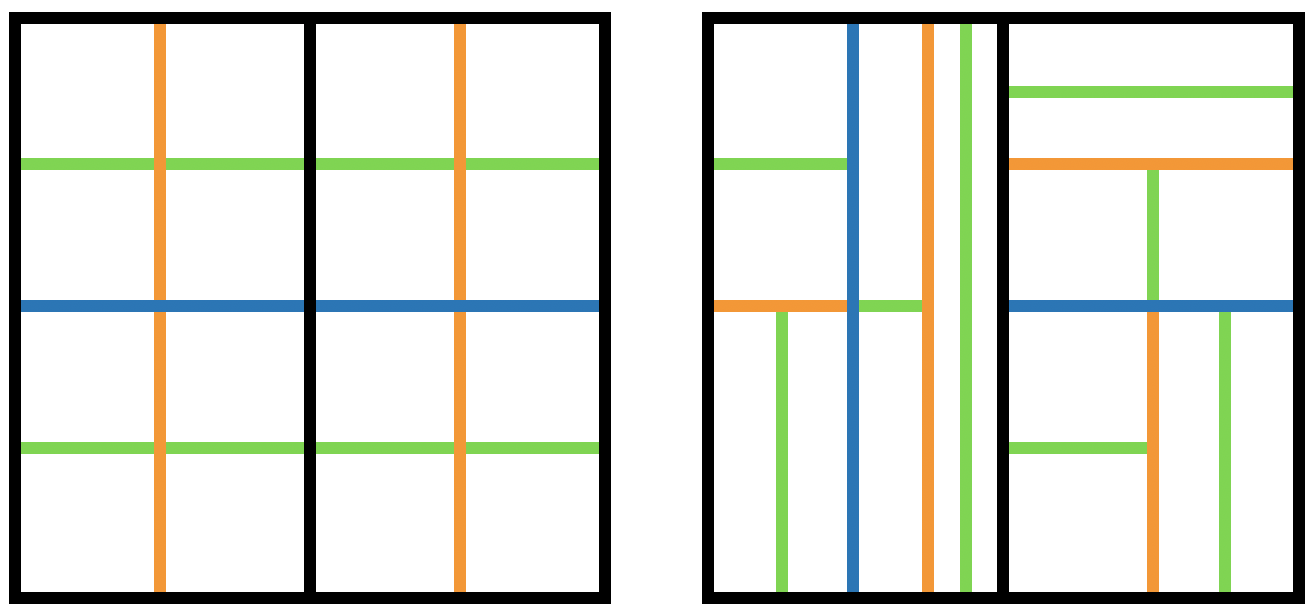}}
\end{figure}

\method uses a function $\refine$, which takes as input a subset $S$ of the state space $\mathcal{S}$, and outputs a collection of $K \geq 1$ different ways of partitioning $S$.  We then use a heuristic to decide which one of these $K$ partitions to keep.  In Figure~\ref{fig:adaptive_partitioning}, $\refine$ takes a rectangle as input and outputs 2 partitionings, the first splitting the rectangle in half horizontally and the second splitting it in half vertically.

\method works as follows. Given a non-negative weight function $w$ for a state space $\mathcal{S}$, we start with the trivial partition $P$ containing only one subset---all of $\mathcal{S}$.  We then call $\refine$ on $\mathcal{S}$, which gives
$K \geq 1$ possible partitions of $\mathcal{S}$.  For each of the $K$ possible partitions, we sum the upper bounds on each subset in the partition, denoting this sum as $\mathit{ub}_i$ for the the $i$-th partition.
At this point, we perform a local optimization step and choose the partition $j$ with the tightest (i.e., smallest) upper bound, $\mathit{ub}_j$. The rest of the $K-1$ options for partitioning $\mathcal{S}$ are discarded at this point. The partition $P$ is `refined' by replacing $\mathcal{S}$ with the disjoint subsets forming the $j$-th partition of $\mathcal{S}$. 

This process is repeated recursively, by calling $\refine$ on another subset $S \in P$, until the sum of upper bounds on all subsets in $P$ is less than the upper bound on $\mathcal{S}$.  We now have a valid nesting partition $P$ of $\mathcal{S}$ and can perform rejection sampling. Similar to Algorithm~\ref{alg:nesting_sample}, we draw a random sample from $P \cup \{\mathit{slack}\}$, where each $S_i \in P$ is chosen with probability $\frac{\Zub_w(S_i)}{\Zub_w(\mathcal{S})}$, and $\mathit{slack}$ is chosen with the remaining probability.  If subset $S_m \in P$ is sampled, we recursively call \method on $S_m$.  If $\mathit{slack}$ is selected, we discard the computation and restart the entire process.  The process stops when $S_m$ is a singleton set $\{a\}$, in which case $a$ is output as the sample. 

\method can be seen as using a greedy approach for optimizing over possible partition trees of $\mathcal{S}$ w.r.t.\ $\Zub_w$. At every node, we partition in a way that minimizes the immediate or ``local'' slack (among the $K$ possible partitioning options). This approach may be sub-optimal due to its greedy nature, but we found it to be efficient and empirically effective.  The efficiency of \method can be improved further by tightening upper bounds whenever slack is encountered, resulting in an \emph{adaptive\footnote{The use of `adaptive' here is to connect this section with the rejection sampling literature, and is unrelated to `adaptive' partitioning discussed earlier.} rejection sampler}~\citep{gilks1992adaptive} (please refer to Section~\ref{sec:adaptive_rejection_sampling} in the Appendix for further details).

\subsection{Estimating the Partition Function}
Armed with a method, \method, for drawing exact samples from a distribution defined by a non-negative weight function $w$ whose partition function $Z_w$ is unknown, we now outline a simple method for using these samples to estimate the partition function $Z_w$.  The acceptance probability of the rejection sampler embedded in \method can be estimated as 
\begin{equation}
\label{eq:p_hat}
    \hat{p} = \frac{\text{accepted samples}}{\text{total samples}} \approx p = \frac{Z_w}{\Zub}
\end{equation}
which yields $\hat{p} \times \Zub$ as an unbiased estimator of $Z_w$. The number of accepted samples out of $T$ total samples is distributed as a Binomial random variable with parameter $p = \frac{Z_w}{\Zub}$.  
The Clopper–Pearson method \cite{clopper1934use} gives tight, high probability bounds on the true acceptance probability, which in turn gives us high probability bounds on $Z_w$.  Please refer to Section~\ref{sec:adaptive_rej_sampling_estimate} in the Appendix for the unbiased estimator of $Z_w$ when performing bound tightening as in an adaptive rejection sampler.

\section{Adaptive Partitioning for the Permanent}

In order to use \method for approximating the permanent of a non-negative matrix $A$, we need to specify two pieces: (a) the $\refine$ method for partitioning any given subset $S$ of the permutations defined by $A$, and (b) a function that upper bounds the permanent of $A$, as well as any subset of the state space (of permutations) generated by $\refine$.

\subsection[Refine for Permutation Partitioning]{$\refine$ for Permutation Partitioning}

We implement the $\refine$ method for partitioning an $n \times n$ matrix into a set of $K = n$ different partitions as follows. One partition is created for each column $i \in \{1, \ldots, n\}$. The $i$-th partition of the $n!$ permutations contains $n$ subsets, corresponding to all permutations containing a matrix element, $\sigma^{-1}(i) = j$ for $j \in \{1, \ldots, n\}$. This is inspired by the fixed partition of \citet[pp. 9-10]{law2009approximately}, modified to choose the column for partitioning adaptively. 



\subsection{Upper Bounding the Permanent}
\label{sec:permanentUBs}

There exists a significant body of work on estimating and bounding the permanent (cf.~an overview by \citet{zhang2016update}), on characterizing the potential tightness of upper bounds \cite{gurvits2006hyperbolic,samorodnitsky2008upper}, 
and on improving upper bounds \cite{hwang1998upper,soules2000extending,soules2003new,soules2005permanental}.  
We use an upper bound from \citet{soules2005permanental}, which is computed as
follows. Define $\gamma(0) = 0$ and $\gamma(k) = (k!)^{1/k}$ for $k \in \mathbb{Z}_{\geq 1}$.
Let $\delta(k) = \gamma(k) - \gamma(k - 1)$.
Given a matrix $A \in \mathbb{R}^{n \times n}$ with entries $A_{ij}$, sort the entries
of each row from largest to smallest to obtain $a^*_{ij}$, where $a^*_{i1}
\geq \dots \geq a^*_{in}$.
This gives the upper bound,
\begin{equation}\label{eq:soulesUB}
   \mathrm{per}(A) \leq \prod_{i=1}^{n} \sum_{j=1}^{n} a^*_{ij} \delta(j).
\end{equation}
If the matrix entries are either 0 or 1, this bound reduces to the Minc-Br\`{e}gman bound \citep{minc1963upper,bregman1973some}.
This upper bound has many desirable properties.  It can be efficiently computed in polynomial time, while tighter bounds (also given by \cite{soules2005permanental}) require solving an optimization problem.  It is significantly tighter than the one used by \citet{law2009approximately}.  This is advantageous because the runtime of \method scales linearly with the bound's tightness (via the acceptance probability of the rejection sampler).

Critically, we empirically find that this bound never requires a second call to $\refine$ in the repeat-until loop of \method.  That is, in practice we always find at least one column that we can partition on to satisfy the nesting condition.  This bounds the number of subsets in a partition to $n$ and avoids a potentially exponential explosion.  This is fortuitous, but also interesting, because this bound (unlike the bound used by \citet{law2009approximately}) does not nest according to any predefined partition tree for all matrices.

\subsection{Dense Matrix Polynomial Runtime Guarantee}


The runtime of \method is bounded for dense matrices as stated in Proposition~\ref{prop:runtime_gaurantee}.  Please refer to Section~\ref{sec:adapart_guarantee_runtime} in the Appendix for further details.

\begin{prop}\label{prop:runtime_gaurantee}
The runtime of \method is $O(n^{1.5 + .5/(2\gamma - 1)})$ for matrices with $\gamma n$ entries in every row and column that all take the maximum value of entries in the matrix, as shown in Algorithm~\ref{alg:adapart_runtime_guarantee}.
\end{prop}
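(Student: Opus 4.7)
The plan is to decompose the total runtime of \method (implemented as in Algorithm~\ref{alg:adapart_runtime_guarantee}) into two factors: (a) the expected number of restarts of the top-level rejection sampler, and (b) the computational cost of a single attempt through the adaptive recursion. Multiplying the two factors yields the final runtime. By construction the sampler's overall acceptance probability equals $\mathrm{per}(A)/\Zub_w(\mathcal{S})$ (cf.\ Eq.~(\ref{eq:p_hat})), so factor (a) equals the ratio $\Zub_w(\mathcal{S})/\mathrm{per}(A)$.

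To bound (a), I would first upper bound the Soules expression in Eq.~(\ref{eq:soulesUB}) using the identity $\sum_{j=1}^k \delta(j) = (k!)^{1/k}$. Because each row contains $\gamma n$ copies of the maximum entry $a_{\max}$ and has no entry larger than $a_{\max}$, this yields $\Zub_w(\mathcal{S}) \leq a_{\max}^n \cdot ((\gamma n)!)^{1/\gamma}$ after a careful accounting of the remaining $(1{-}\gamma)n$ tail entries per row. For the denominator I would invoke the Van der Waerden / Egorychev--Falikman inequality applied to the doubly-stochastic rescaling of the $\gamma n$-regular submatrix of max entries, giving $\mathrm{per}(A) \geq a_{\max}^n \gamma^n n!$. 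Applying Stirling's formula to both expressions turns the ratio into a polynomial factor in $n$, and careful bookkeeping of the leading constants produces the exponent $0.5/(2\gamma-1)$ stated in the proposition. Factor (b) follows a more elementary analysis: each attempt descends at most $n$ levels of recursion, and at each level \refine produces a column-based partition with $O(n)$ subsets on each of which the Soules bound is re-evaluated; by maintaining sorted rows incrementally across commits and exploiting the structure of Algorithm~\ref{alg:adapart_runtime_guarantee}, the per-attempt cost amortizes to $O(n^{1.5})$. Multiplying (a) by (b) gives the claimed $O(n^{1.5 + 0.5/(2\gamma - 1)})$.

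The main obstacle I anticipate is maintaining an appropriate denseness invariant throughout the recursion: committing $\sigma(j) = i$ may strip one max entry from each remaining row and column, so the effective $\gamma$ at depth $d$ can degrade with depth. Since \refine greedily partitions on the column minimizing the refined upper bound, and this choice naturally favors columns rich in max entries, I would argue that the effective $\gamma$ remains bounded above $1/2$ along the recursion path (which is precisely the regime in which the stated exponent is finite), so that the bounds of (a) and (b) transfer uniformly to every subproblem encountered during sampling. A secondary issue is rigorously justifying that the repeat-until loop in Algorithm~\ref{alg:nesting_sample_partition} terminates after a single call to \refine on this structured matrix class (something the paper only verifies empirically for general matrices in Section~\ref{sec:permanentUBs}); here it should follow from a direct verification of the nesting inequality in Eq.~(\ref{eq:nestingbound}) using the same denseness invariant. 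Once both ingredients are in place, the proposition follows by combining them with the elementary multiplication above.
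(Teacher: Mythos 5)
Your proposal misses the argument the paper actually relies on, and the direct analysis you sketch in its place has steps that fail. The paper does not analyze the Soules bound at all: Proposition~\ref{prop:runtime_gaurantee} is proved by reduction. Algorithm~\ref{alg:adapart_runtime_guarantee} removes the repeat-until loop and, whenever a \emph{single} call to $\refine$ fails to produce a nesting partition, it falls back to Algorithm~\ref{alg:nesting_sample} run with the upper bound and fixed partition of \citet{law2009approximately}; the claimed $O(n^{1.5 + .5/(2\gamma-1)})$ guarantee is then simply inherited from their theorem for dense matrices (with the adaptive branch only improving the acceptance probability, since the Soules bound is tighter). So the one-refine nesting question you flag as a ``secondary issue'' is precisely what the construction is designed to avoid having to prove --- and it would be hard to prove, since the paper itself notes the Soules bound provably does not nest a priori; your plan needs either that proof or an appeal to \citet{law2009approximately} for the fallback branch, which is the paper's entire proof.

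The from-scratch route also breaks at specific points. The decoupled bound $\Zub_w(\mathcal{S}) \leq a_{\max}^n\,((\gamma n)!)^{1/\gamma}$ is false for the stated matrix class: the remaining $(1-\gamma)n$ entries per row are only required to be $\leq a_{\max}$, so (e.g.\ when all entries equal $a_{\max}$) the Soules row factor is $a_{\max}(n!)^{1/n}$, and pairing the resulting $a_{\max}^n n!$ with the Van der Waerden lower bound $a_{\max}^n\gamma^n n!$ taken over the max-entry submatrix yields a ratio of $\gamma^{-n}$ --- exponential, not polynomial. A correct analysis (as in \citet{law2009approximately}) must track the bound-to-permanent ratio \emph{along the descent}, where after $k$ committed assignments each residual row/column retains at least $\gamma n - k$ maximal entries; the $2\gamma - 1$ in the exponent comes from the depth $\approx(2\gamma-1)n$ at which the residual density can fall to $1/2$, not from an invariant that greedy column selection keeps the effective density above $1/2$ throughout (that invariant is unsubstantiated and false near the bottom of the recursion). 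Finally, the $O(n^{1.5})$ amortized per-attempt cost is asserted without argument, and note that \method's $\refine$ evaluates $n$ candidate partitions of $n$ subsets each, so per-node work is larger than in the fixed-partition scheme --- another reason the paper states the guarantee for the fallback-equipped Algorithm~\ref{alg:adapart_runtime_guarantee} rather than re-deriving it for the adaptive sampler.
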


    
    

  

\section{Related Work on Approximating the Permanent}

The fastest exact methods for calculating the permanent have computational complexity that is exponential in the matrix dimension \cite{ryser1963combinatorial,bax1996finite,balasubramanian1980combinatorics,glynn2013permanent}.  This is to be expected, because computing the permanent has been shown to be \#P-complete \cite{valiant1979complexity}.  Work to approximate the permanent has thus followed two parallel tracks, sampling based approaches and variational approaches. 

The sampling line of research has achieved complete (theoretical) success.  \citet{jerrum2004polynomial} proved the existence of a fully polynomial randomized approximation scheme (FPRAS) for approximating the permanent of a general non-negative matrix, which was an outstanding problem at the time \cite{broder1986hard,jerrum1989approximating}.  An FPRAS is the best possible solution that can be reasonably hoped for since computing the permanent is \#P-complete.  Unfortunately, the FPRAS presented by \cite{jerrum2004polynomial} has seen little, if any, practical use.  The algorithm is both difficult to implement and slow with polynomial complexity of $O(n^{10})$, although this complexity was improved to $O(n^7 \log^4 n)$ by \citet{bezakova2006accelerating}.  

In the variational line of research
, the Bethe approximation of the permanent \citep{huang2009approximating,vontobel2014bethe} is guaranteed to be accurate within a factor of ${2}^{n/2}$ \citep{anari2018tight}.  This approach uses belief propagation to minimize the Bethe free energy as a variational objective.  A closely related approximation, using Sinkhorn scaling, is guaranteed to be accurate within a factor of $2^n$ \citep{gurvits2014bounds}.  The difference between these approximations is discussed in \citet{vontobel2014bethe}.  The Sinkhorn based approximation has been shown to converge in polynomial time \citep{linial2000deterministic}, although the authors of \citep{huang2009approximating} could not prove polynomial convergence for the Bethe approximation.  
\citet{aaronson2014generalizing} build on \citep{gurvits2002deterministic} (a precursor to \citep{gurvits2014bounds}) to estimate the permanent in polynomial time within additive error that is exponential in the largest singular value of the matrix.  \jonathan{i'm not really sure if this reference has a variational interpretation, but it does build on work that is by the same author and looks related to the sinkhorn approximation}
While these variational approaches are relatively computationally efficient, their bounds are still exponentially loose.

There is currently a gap between the two lines of research.  The sampling line has found a theoretically ideal FPRAS which is unusable in practice.  The variational line has developed algorithms which have been shown to be both theoretically and empirically efficient, but whose approximations to the permanent are exponentially loose, with only specific cases where the approximations are good \citep{huang2009approximating,chertkov2008belief,chertkov2010inference}.  \citet{huber2006exact} and \citet{law2009approximately} began a new line of sampling research that aims to bridge this gap.  They present a sampling method which is straightforward to implement and 
has a polynomial runtime guarantee for dense matrices.
While there is no runtime guarantee for general matrices, their method is significantly faster than the FRPAS of \cite{jerrum2004polynomial} for dense matrices.  In this paper we present a novel sampling algorithm that builds on the work of \cite{huber2006exact, law2009approximately}.  We show that \method leads to significant empirical speedups, further closing the gap between the sampling and variational lines of research.

\section{Experiments}\label{sec:experiments}

In this section we show the empirical runtime scaling of \method as matrix size increases, test \method on real world matrices, compare \method with the algorithm from \citet{law2009approximately} for sampling from a fixed partition tree, and compare with variational approximations \cite{huang2009approximating,anari2018tight,gurvits2014bounds}.  Please see section \ref{sec:additional_exp} in the Appendix for additional experiments verifying that the permanent empirically falls within our high probability bounds.


\subsection{Runtime Scaling and Comparison with Variational Approximations}

To compare the runtime performance of \method with \citet{law2009approximately} we generated random matrices of varying size.  We generated matrices in two ways, by uniformly sampling every element from $[0,1)$ (referred to as `uniform' in plots) and by sampling $\lfloor \frac{n}{k} \rfloor$ blocks of size $k\text{x}k$ and a single $n\ \mathrm{mod}\ k$ block along the diagonal of an $n\text{x}n$ matrix (with all other elements set to 0, referred to as `block diagonal' in plots).
Runtime scaling is shown in Figure~\ref{fig:matrix_scaling}.
While \method is faster in both cases, we observe the most time reduction for the more challenging, low density block diagonal matrices.  For reference a Cython implementation of Ryser's algorithm for exactly computing the permanent in exponential time \cite{ryser1963combinatorial} requires roughly 1.5 seconds for a $20\text{x}20$ matrix.

\begin{figure}[h]
\centering
\begin{subfigure}{.5\linewidth}
  \centering
  \includegraphics[width=.99\linewidth]{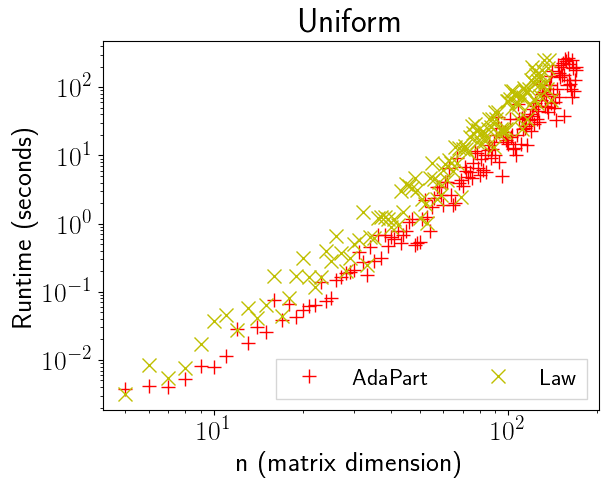}
\end{subfigure}%
\begin{subfigure}{.5\linewidth}
  \centering
  \includegraphics[width=.99\linewidth]{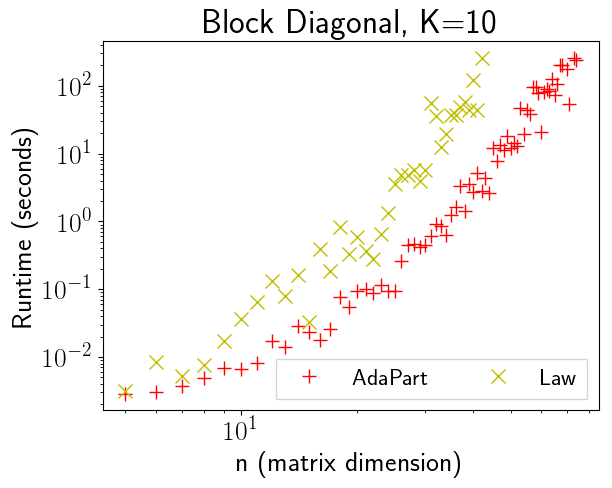}
\end{subfigure}
\caption{Log-log plot of mean runtime over 5 samples against $n$ (matrices are of size $n \times n$).}  
\label{fig:matrix_scaling}
\end{figure}

To demonstrate that computing the permanent of these matrices is challenging for variational approaches, we plot the bounds obtained from the Bethe and Sinkhorn approximations in Figure~\ref{fig:variational_comparison}.  Note that the gap between the variational lower and upper bounds is exponential in the matrix dimension $n$.  Additionally, the upper bound from \citet{soules2005permanental} (that we use in \method) is frequently closer to the exact permanent than all variational bounds.  

\begin{figure}[h]
\centering
\begin{subfigure}{.33\linewidth}
  \centering
  \includegraphics[width=.99\linewidth]{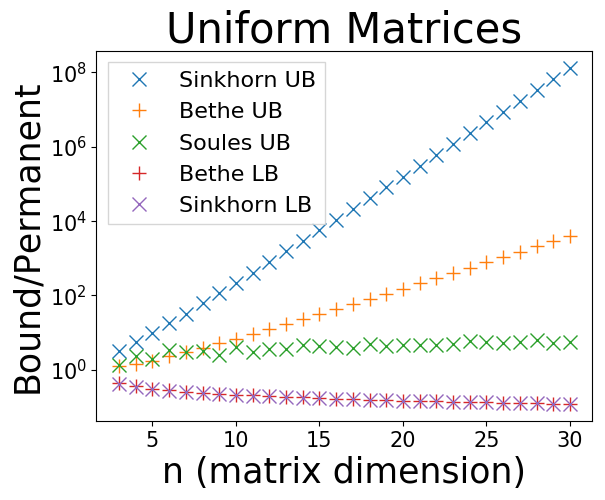}
\end{subfigure}%
\begin{subfigure}{.33\linewidth}
  \centering
  \includegraphics[width=.99\linewidth]{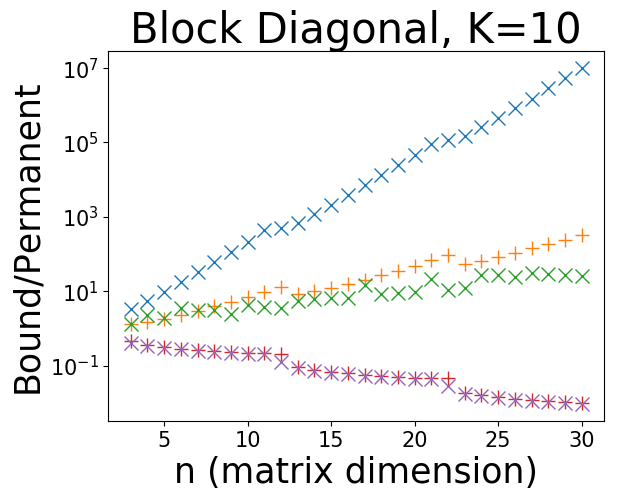}
\end{subfigure}
\begin{subfigure}{.33\linewidth}
  \centering
  \includegraphics[width=.99\linewidth]{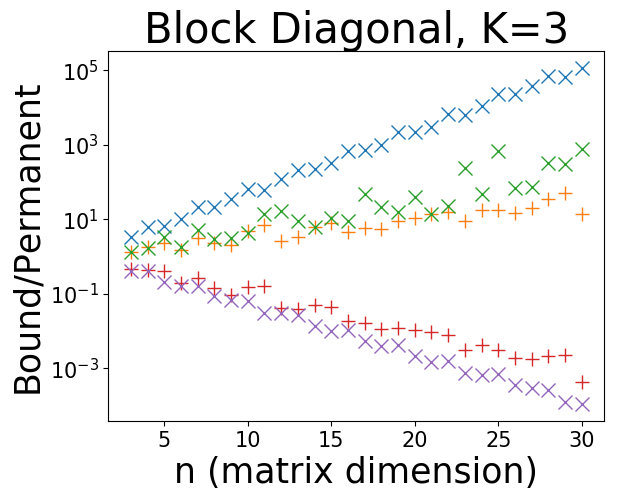}
\end{subfigure}
\caption{Bounds on the permanent given by the Bethe approximation \cite{huang2009approximating,anari2018tight}, the Sinkhorn approximation \cite{gurvits2014bounds}, and the upper bound we use from \citet{soules2005permanental}.}\label{fig:variational_comparison}  
\end{figure}

\subsection{Matrices from Real-World Networks}

In Table~\ref{table:real_matrices} we show the performance of our method on real world problem instances.  In the context of directed graphs, the permanent represents the sum of weights of \emph{cycle covers} (i.e., a set of disjoint directed cycles that together cover all vertices of the graph) and defines a distribution over cycle covers.  Sampling cycle covers is then equivalent to sampling permutations from the distribution defined by the permanent.  We sampled 10 cycle covers from distributions arising from graphs\footnote{Matrices available at http://networkrepository.com.} in the fields of cheminformatics, DNA electrophoresis, and power networks and report mean runtimes in Table~\ref{table:real_matrices}.
Among the matrices that did not time out, \method can sample cycle covers 12 - 25x faster than the baseline from \citet{law2009approximately}.
We used 10 samples from \method to compute bounds on the permanent that are tight within a factor of 5 and hold with probability $.95$, shown in the \method sub-columns of Table~\ref{table:real_matrices} (we show the natural logarithm of all bounds).  Note that we would get comparable bounds using the method from \cite{law2009approximately} as it is also produces exact samples.  For comparison we compute variational bounds using the method of \cite{gurvits2014bounds}, shown in the `Sinkhorn' sub-columns.  Each of these bounds was computed in less than .01 seconds, but they are generally orders of magnitude looser than our sampling bounds.  Note that our sampling bounds can be tightened arbitrarily by using more samples at the cost of additional (parallel) computation, while the Sinkhorn bounds cannot be tightened.  We do not show bounds given by the Bethe approximation because the matlab code from \cite{huang2009approximating} was very slow for matrices of this size and the c++ code does not handle matrices with 0 elements.


\begin{table}[h!] 
\small
\setlength\tabcolsep{4px}
\centering
    \begin{tabular}{@{}lcccccccc@{}}
    \toprule
\multicolumn{3}{c}{Model Information} & \multicolumn{2}{c}{Sampling Runtime (sec.)} &
\multicolumn{2}{c}{Lower Bounds} &
\multicolumn{2}{c}{Upper Bounds}\\
\cmidrule(r){1-3} \cmidrule(r){4-5} \cmidrule(r){6-7} \cmidrule(r){8-9}
 Network Name & Nodes & Edges   & \method & \citet{law2009approximately} & \method & Sinkhorn & \method & Sinkhorn \\\midrule
     ENZYMES-g192             & 31                  & 132 & \textbf{4.2}      & 52.9  & \textbf{19.3} & 17.0  & \textbf{20.8} & 38.5\\
     ENZYMES-g230             & 32                  & 136 & \textbf{3.3}      & 55.5  & \textbf{19.8} & 17.2  & \textbf{21.3} & 39.4\\
     ENZYMES-g479             & 28                  & 98 & \textbf{1.8}      &  45.1  & \textbf{12.3}& 10.9  & \textbf{13.8}& 30.3\\
     cage5             & 37                  & 196  & \textbf{6.1}      &    TIMEOUT  & \textbf{-20.2}& -29.2 & \textbf{-18.7}& -3.6\\
     bcspwr01             & 39                  & 46  & \textbf{4.2}      &  74.8     & \textbf{18.7}& 13.2 &  \textbf{20.1}& 40.3\\
 \bottomrule
    \end{tabular}
\caption{Runtime comparison of our algorithm (\method) with the fixed partitioning algorithm from \citet{law2009approximately} and bound tightness comparison of \method with the Sinkhorn based variational bounds from \cite{gurvits2014bounds} (logarithm of bounds shown).  Best values are in \textbf{bold}.}
\label{table:real_matrices}
\end{table}

\subsection{Multi-Target Tracking}

The connection between measurement association in multi-target tracking and the matrix permanent arises frequently in tracking literature \cite{uhlmann2004matrix,morelande2009joint,oh2009markov,hamid2015joint}.  It is used to calculate the marginal probability that a measurement was produced by a specific target, summing over all other joint measurement-target associations in the association matrix.  
We implemented a Rao-Blackwellized particle filter that uses \method to sample from the optimal proposal distribution and compute approximate importance weights (see Section~\ref{sec:mtt_overview} in the Appendix).  

We evaluated the performance of our particle filter using synthetic multi-target tracking data.  Independent target motion was simulated for 10 targets with linear Gaussian dynamics.  Each target was subjected to a unique spring force.  As baselines, we evaluated against a Rao-Blackwellized particle filter using a sequential proposal distribution \cite{sarkka2004rao} and against the standard multiple hypothesis tracking framework (MHT) \cite{reid1979algorithm,chong2018forty,kim2015multiple}.  We ran each method with varying numbers of particles (or tracked hypothesis in the case of MHT) and plot the maximum log-likelihood of measurement associations among sampled particles in Figure~\ref{fig:tracking_performance}.  The mean squared error over all inferred target locations (for the sampled particle with maximum log-likelihood) is also shown in Figure~\ref{fig:tracking_performance}.  We see that by sampling from the optimal proposal distribution (blue x's in Figure~\ref{fig:tracking_performance}) we can find associations with larger log-likelihood and lower mean squared error than baseline methods while using an order of magnitude fewer samples (or hypotheses in the case of MHT).

\begin{figure}
\centering
\begin{subfigure}{.5\linewidth}
  \centering
  \includegraphics[width=.99\linewidth]{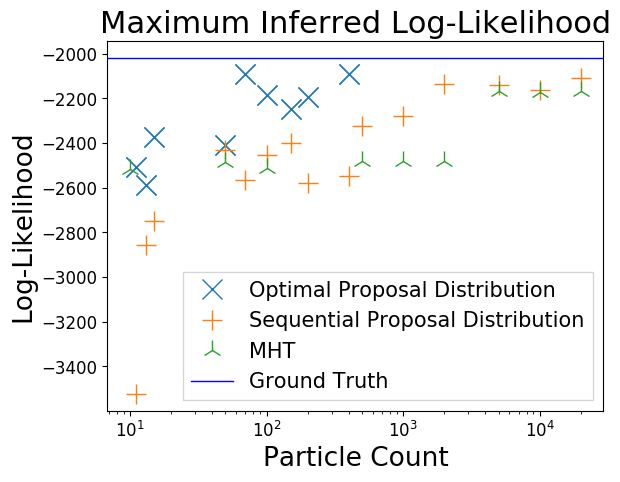}
\end{subfigure}%
\begin{subfigure}{.5\linewidth}
  \centering
  \includegraphics[width=.99\linewidth]{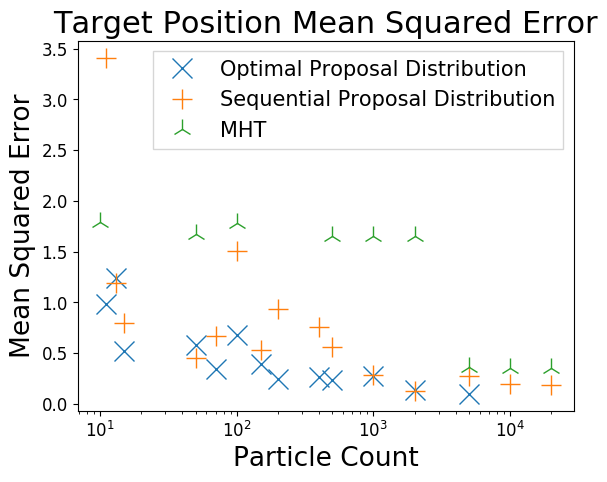}
\end{subfigure}
\caption{Multi-target tracking performance comparison.  Left: maximum log-likelihoods among sampled particles (or top-$k$ hypotheses for the MHT baseline).  Right: mean squared error over all time steps and target locations.
}
\label{fig:tracking_performance}
\end{figure}

\section{Conclusion and Future Work}

Computing the permanent of a matrix is a fundamental problem in computer science. It has many applications, but exact computation is intractable in the worst case. Although a theoretically sound randomized algorithm exists for \emph{approximating} the permanent in polynomial time, it is impractical. We proposed a general approach, \method, for drawing exact samples from an unnormalized distribution.  We used \method to construct high probability bounds on the permanent in provably polynomial time for dense matrices.  We showed that \method is significantly faster than prior work on both dense and sparse matrices which are challenging for variational approaches.  
Finally, we applied \method to the multi-target tracking problem and showed that we can improve tracking performance while using an order of magnitude fewer samples.

In future work, \method may be used to estimate general partition functions if a general upper bound \cite{wainwright2003tree,liu2011bounding,lou2017anytime} is found to nest with few calls to $\refine$.  The matrix permanent specific implementation of \method may benefit from tighter upper bounds on the permanent.  Particularly, a computationally efficient implementation of the Bethe upper bound \citep{huang2009approximating,anari2018tight} would yield improvements on sparse matrices (see Figure~\ref{fig:variational_comparison}), which could be useful for multi-target tracking where the association matrix is frequently sparse. 
The `sharpened' version of the bound we use (Equation~\ref{eq:soulesUB}), also described in \cite{soules2005permanental}, would offer performance improvements if the `sharpening' optimization problem can be solved efficiently.
\subsection*{Acknowledgements}
Research supported by NSF (\#1651565, \#1522054, \#1733686), ONR  (N00014-19-1-2145), AFOSR (FA9550-
19-1-0024), and FLI. 

\nocite{barvinok2016computing}

\bibliography{bibliography}
\bibliographystyle{plainnat}

\clearpage
\appendix
\section{Appendix}
\subsection{Proof of Correctness for Sampling with a Fixed Partition}
Algorithm \ref{alg:nesting_sample} specifies a method for sampling from a weight function given a fixed partition tree and a bound that provably nests.  Its proof of correctness is given in Proposition\ref{prop:rejection_sampling_correctness}.  Note that a simple property that follows from recursively applying the definition of a nesting bound is that
$\sum_{i \in \mathcal{S}} w(i) \leq \Zub_w (\mathcal{S})$.
More generally, given any node $v$ in $\mathcal{T}$ associated with the subset $S_v \subseteq \mathcal{S}$, we have
$\sum_{i \in S_v} w(i) \leq \Zub_w (S_v)$.

\begin{prop}[\citet{huber2006exact,law2009approximately}]
\label{prop:rejection_sampling_correctness}
Algorithm~\ref{alg:nesting_sample} samples an element $i \in \mathcal{S}$ from the normalized weight function $i \sim \frac{w(i)}{\sum_{j \in \mathcal{S}} w(j)}$.
\end{prop}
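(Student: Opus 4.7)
The plan is to decompose a single execution of Algorithm~\ref{alg:nesting_sample} into two phases: a top-down traversal of the partition tree $\mathcal{T}$ that, with some probability, terminates at a leaf $\{a\}\in\mathcal{S}$, and an outer rejection loop that restarts whenever the traversal instead lands on a $\mathit{slack}$ outcome. I will first compute the probability that a single top-down traversal returns any particular element $a$, then apply the standard rejection-sampling identity to get the output distribution conditional on acceptance.

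For the top-down phase, fix $a\in\mathcal{S}$ and let $\mathcal{S}=S^{(0)}\supsetneq S^{(1)}\supsetneq\cdots\supsetneq S^{(d)}=\{a\}$ be the unique root-to-leaf path in $\mathcal{T}$ leading to $\{a\}$. At the node associated with $S^{(t)}$, Algorithm~\ref{alg:nesting_sample} selects the child $S^{(t+1)}$ with probability $\Zub_w(S^{(t+1)})/\Zub_w(S^{(t)})$ (the nesting property in Eq.~(\ref{eq:nestingbound}) is exactly what makes these numbers sum to at most $1$, so that assigning the remainder to $\mathit{slack}$ yields a valid distribution). Because traversal choices at different levels are independent conditional on reaching the previous node, the probability that a single top-down call returns $a$ is the telescoping product
\begin{equation*}
\prod_{t=0}^{d-1}\frac{\Zub_w(S^{(t+1)})}{\Zub_w(S^{(t)})}
\;=\;\frac{\Zub_w(\{a\})}{\Zub_w(\mathcal{S})}
\;=\;\frac{w(a)}{\Zub_w(\mathcal{S})},
\end{equation*}
where the last equality uses the tightness requirement $\Zub_w(\{a\})=w(a)$ from Definition~\ref{nesting_UB}.

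Summing over $a\in\mathcal{S}$, the probability that a single top-down call accepts (i.e., does not return $\mathit{slack}$ somewhere along the path) equals $\sum_{a\in\mathcal{S}}w(a)/\Zub_w(\mathcal{S})=Z_w/\Zub_w(\mathcal{S})$. Since $\Zub_w(\mathcal{S})\ge Z_w$ by the observation preceding the proposition, and since we may assume $Z_w>0$ (otherwise the claim is vacuous), this acceptance probability is strictly positive. A standard geometric argument then shows the outer rejection loop terminates almost surely, and conditioning on acceptance the output equals $a$ with probability $(w(a)/\Zub_w(\mathcal{S}))/(Z_w/\Zub_w(\mathcal{S}))=w(a)/Z_w$, which is the target distribution.

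The only real subtlety—and the point I would be most careful about—is justifying the telescoping at internal nodes that are themselves the result of recursive calls: the probabilities along the path must multiply, which requires that the decision made at each level is drawn independently from the normalized distribution $\Zub_w(\cdot)/\Zub_w(S^{(t)})$ over the children of $S^{(t)}$ together with $\mathit{slack}$. This follows directly from the description of the algorithm, but it is worth writing out explicitly, since the proof depends on the nesting inequality being used twice: once to ensure that the per-node sampling is well-defined, and once (implicitly, via recursive application) to turn the product of local probabilities into the ratio $w(a)/\Zub_w(\mathcal{S})$.
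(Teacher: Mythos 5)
Your proof is correct and follows essentially the same route as the paper: the telescoping product of the per-node conditional probabilities $\Zub_w(S^{(t+1)})/\Zub_w(S^{(t)})$ along the root-to-leaf path, combined with conditioning on acceptance (acceptance probability $Z_w/\Zub_w(\mathcal{S})$), giving $w(a)/Z_w$. You are slightly more explicit than the paper in deriving the acceptance probability by summing over leaves and in noting almost-sure termination of the rejection loop, but the argument is the same.
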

\begin{proof} The probability of sampling leaf node $v_i$ at depth $d$ in the partition tree, with ancestors $v^a_{d-1}$, \dots, $v^a_0$ (where $v^a_{d-1}$ is the parent node of $v_i$ and $v^a_0$ is the root node) and associated ancestor subsets $S^a_{d-1}$, \dots, $S^a_0$ is
\begin{align*}
& \frac{1}{p_{accept}} \times \frac{Z^{UB}_w(S^a_1)}{Z^{UB}_w(S^a_0)} \times \frac{Z^{UB}_w(S^a_2)}{Z^{UB}_w(S^a_1)} \times \dots \times \frac{Z^{UB}_w(S^a_d)}{Z^{UB}_w(S^a_{d-1})} \\
= & \frac{1}{p_{accept}} \times \frac{Z^{UB}_w(S^a_d)}{Z^{UB}_w(S^a_0)} = \frac{Z^{UB}_w(\mathcal{S})}{Z_w} \times \frac{w(i)}{Z^{UB}_w(\mathcal{S})}
= \frac{w(i)}{\sum_{i \in \mathcal{S}} w(i)} \\
\end{align*}
\end{proof}

\begin{algorithm}[h]
  \caption{Sample from a Normalized Weight Function}
  \label{alg:nesting_sample}
\begin{flushleft}
  \textbf{Inputs:} 
\end{flushleft}
  
\begin{enumerate}
    \item Non-empty state space $\mathcal{S} = \{1, \dots, N\}$
    \item Partition tree $\mathcal{T}$ of $\mathcal{S}$
    \item Unnormalized weight function $w:\mathcal{S} \rightarrow \mathbb{R}_{\geq 0}$
    \item Nesting upper bound $\Zub_w(S)$ for $w$ with respect to  $\mathcal{T}$ 
\end{enumerate}

\begin{flushleft}
  \textbf{Output:} A sample $i \in \mathcal{S}$ distributed as $i \sim \frac{w(i)}{\sum_{j \in \mathcal{S}} w(j)}$.
\end{flushleft}
  
\begin{flushleft}
  \textbf{Algorithm:}
\end{flushleft}
  
\begin{enumerate}
    \item Set $v$ to the root node of $\mathcal{T}$ and $S = \mathcal{S}$.
    \item Sample a child of $v$ (denoted $v_1, \dots, v_k$ with associated subsets $S_1, \dots, S_k$ of $\mathcal{S}$) or slack with probabilities:
    \begin{align*}
        p(v_l) & = \frac{\Zub_w(S_l)}{\Zub_w(S)} \hspace{3em}
        p(\text{slack})  = 1 - \frac{\sum_{l = 1}^k \Zub_w(S_l)}{\Zub_w(S)}      
    \end{align*}
    \item If a child was sampled with an associated subset containing a single element then return this element.
    \item If a child, $v_l$, was sampled with an associated subset containing more than one element then set $v = v_l$, $S = S_l$, and go to step 2.
    \item If the slack element was sampled then go to step 1.
    
\end{enumerate}
\end{algorithm}

\subsection{Adaptive Rejection Sampling}\label{sec:adaptive_rejection_sampling}

We can improve the efficiency of \method by tightening the upper bounds $\Zub_w$ whenever we encounter slack. This is done by subtracting the computed slack from the associated upper bounds, which still preserves nesting properties. 
The resulting algorithm is an \emph{adaptive rejection sampler}~\citep{gilks1992adaptive}, where the ``envelope'' proposal is tightened every time a point is rejected.\footnote{The use of `adaptive' here is to connect this section with the rejection sampling literature, and is unrelated to `adaptive' partitioning discussed earlier.}

Formally, for any partition $P$ of $S$, we define a new, tighter upper bound as follows:
\begin{equation}
  \Zubunder_w(S) = \min \left\{ \sum_{S_i \in P} \Zub_w(S_i),\, \Zub_w(S) \right\}.
\end{equation}
This is still a valid upper bound
on $\Zub_w(S)$ because of the additive nature of $Z_w$,
and is, by definition, also nesting w.r.t.\ the partition $P$. 
If we encounter any slack, there must exists some $S$ for which $\Zubunder_w(S) <  \Zub_w(S)$, hence we can \emph{strictly} tighten our bound for subsequent steps of the algorithm (thereby making \method more efficient) by using $\Zubunder_w(S)$ instead of $\Zub_w(S)$.
For matrices sampled uniformly from $[0,1)$, we empirically find that bound tightening is most effective for small matrices.  After 1000 samples we improve our bound on the permanent to roughly 64\%, 77\%, and 89\% of the original bound for matrices of size 10, 15, 25 respectively.  However, bound tightening is more effective for other types of matrices.  For the matrices from real world networks in Section~\ref{sec:experiments}, after drawing 10 samples we improve our bound on the permanent to roughly 25\%, 22\%, 8\%, 7\%, and 19\% for models ENZYMES-g192, ENZYMES-g230, ENZYMES-g479, cage5, and bcspwr01 respectively.  


\subsection{Estimating the Partition Function with Adaptive Rejection Sampling}\label{sec:adaptive_rej_sampling_estimate}


The number of accepted samples, $a$, is a random variable with expectation $E[a] = \sum_{i=1}^T \frac{Z}{Z^{UB}_i}$, where $Z^{UB}_i$ is the upper bound on the entire state space $\mathcal{S}$ when the $i$-th sample is drawn.  This gives the unbiased estimator $\hat{Z} = a/\left(\sum_{i=1}^T \frac{1}{Z^{UB}_i}\right)$ for the partition function.  We use bootstrap techniques \cite{chernick2008bootstrap} to perform Monte Carlo simulations that yield high probability bounds on the partition function.  We used 100,000 samples to compute our bounds in Table~\ref{table:real_matrices}, which took 2-6 seconds using our python script, but note that this computation is extremely parallelizable.  We computed each bootstrap upper and lower bound 10 times and always sampled the same values, except for one case where the log upper bound changed by .06, showing that 100,000 samples is sufficient.


\subsection{Runtime Guarantee of \method}\label{sec:adapart_guarantee_runtime}
\citet{law2009approximately} prove that the runtime of Algorithm~\ref{alg:nesting_sample} is  $O(n^{1.5 + .5/(2\gamma - 1)})$ per sample when using their upper bound on the permanent \citep[p. 33]{law2009approximately}, where $\gamma$ controls density.  \method has the same guarantee with a minor modification to the presentation in Algorithm~\ref{alg:nesting_sample_partition}.  The repeat loop is removed and if the terminating condition $\mathit{ub} \leq \Zub_w(\mathcal{S})$ is not met after a single call to $\refine$, Algorithm~\ref{alg:nesting_sample} is called with the upper bound from and fixed partitioning strategy from \citep{law2009approximately} as shown in Algorithm~\ref{alg:adapart_runtime_guarantee}.

\begin{algorithm}[t!]
  \caption{\method: Sample from a Normalized Weight Function using Adaptive Partitioning with Polynomial Runtime Guarantee for Dense Matrices}
  \label{alg:adapart_runtime_guarantee}
  \begin{flushleft}
  \textbf{Inputs:} 
  \end{flushleft}
\begin{enumerate}
    \item Non-empty state space $\mathcal{S}$
    \item Unnormalized weight function $w: \mathcal{S} \rightarrow \mathbb{R}_{\geq 0}$
    \item Family of upper bounds 
    $\Zub_w(S):\mathcal{D} \subseteq 2^{\mathcal{S}} \rightarrow \mathbb{R}_{\geq 0}$ for $w$ that are tight on single element subsets
    \item Refinement function $\refine: \mathcal{P} \rightarrow 2^{\mathcal{P}}$ where $\mathcal{P}$ is the set of all partitions of $\mathcal{S}$
\end{enumerate}
  \begin{flushleft}
  \textbf{Output:} A sample $i \in \mathcal{S}$ distributed as $i \sim \frac{w(i)}{\sum_{i \in \mathcal{S}} w(i)}$.
   \end{flushleft}

 \begin{algorithmic}
 \STATE \algorithmicif\ {$\mathcal{S}=\{a\}$} \algorithmicthen\ {Return $a$}
 
 \STATE $\mathit{ub} \leftarrow \Zub_w(\mathcal{S})$
   \STATE $\{\{S^i_1, \cdots, S^i_{\ell_i}\}\}_{i=1}^K \leftarrow \refine(\mathcal{S})$
   \FORALL {$i \in \{1, \cdots, K\}$}
     \STATE $\mathit{ub}_i \leftarrow \sum_{j=1}^{\ell_i} \Zub_w(S^i_j)$
   \ENDFOR
   \STATE $j \leftarrow \arg \min_i \mathit{ub}_i$
   \STATE $P \leftarrow \{S^j_1, \cdots, S^j_{\ell_j}\}$
   \STATE $\mathit{ub} \leftarrow \mathit{ub} - \Zub_w(S) + \mathit{ub}_j$
\IF {$\mathit{ub} > \Zub_w(\mathcal{S})$}
    \STATE Return the output of Algorithm~\ref{alg:nesting_sample} called on $\mathcal{S}$ and $w$ with the bound and fixed partition of \citep{law2009approximately}
\ELSE
 \STATE Sample a subset $S_i \in P$ with prob.\
 $\frac{\Zub_w(S_i)}{\Zub_w(\mathcal{S})}$, or sample $\mathit{slack}$ with prob.~$1-\frac{\mathit{ub}}{\Zub_w(\mathcal{S})}$
\ENDIF
 \IF {$S_m \in P$ is sampled}
   \STATE Recursively call \method($S_m,w,\Zub_w,\refine$)
 \ELSE
   \STATE Restart, i.e., call \method($\mathcal{S},w,\Zub_w,\refine$)
 \ENDIF
 \end{algorithmic}
\end{algorithm}

\subsection{Additional Experiments} \label{sec:additional_exp}

\begin{figure}[ht]
\centering
\includegraphics[width=8cm]{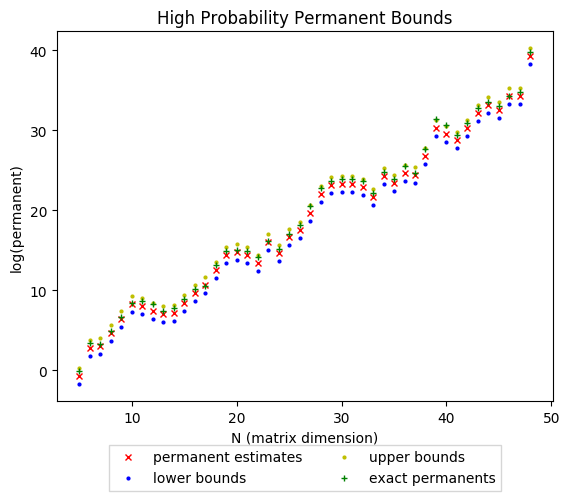}
\caption{Accuracy results on randomly sampled $n \text{x} n$ block diagonal matrices constructed as described earlier, with blocks of size $k=10$.  We plot the exact permanent, our estimate, and our high probability bounds calculated from 10 samples for each matrix. 
}  \label{fig:diag_correctness} 
\end{figure}

While calculating the permanent of a large matrix is generally intractable, it can be done efficiently for certain special types of matrices.  One example is block diagonal matrices, where an $n\text{x}n$ matrix is composed of  $\lfloor \frac{n}{k} \rfloor$ blocks of size $k\text{x}k$ and a single $n\ \mathrm{mod}\ k$ block along the diagonal.  Only elements within these blocks on the diagonal may be non-zero.  The permanent of a block diagonal matrix is simply the product of the permanents of each matrix along the diagonal, which can be calculated efficiently whenever the block size is sufficiently small.  We plot the exact permanent, our estimate, and our high probability bounds for randomly sampled block diagonal matrices of various sizes in Figure~\ref{fig:diag_correctness}.
\subsection{Multi-Target Tracking Overview}\label{sec:mtt_overview}
The multi-target tracking problem is very similar to classical inference problems in hidden Markov models, requiring the estimation of an unobserved state given a time series of noisy measurements.  The non-standard catch is that at each time step the observer is given one noisy measurement per target, but is not told which target produced which measurement.  Brute forcing the problem is intractable because there are $K!$ potential associations when tracking $K$ targets.  The connection between measurement association and the matrix permanent arises frequently in tracking literature \cite{uhlmann2004matrix,morelande2009joint,oh2009markov,hamid2015joint}, and its computational complexity is cited when discussing the difficulty of multi-target tracking.  

As brief background, the computational complexity of multi-target tracking has led to many heuristic approximations, notably including multiple hypothesis tracking (MHT) \cite{reid1979algorithm,chong2018forty,kim2015multiple} and joint probabilistic data association (JPDA) \cite{fortmann1983sonar,hamid2015joint}.  As heuristics, they can succumb to failure modes.  JPDA is known to suffer from target coalescence where neighboring tracks merge \cite{blom2000probabilistic}.

Alternatively, sequential Monte Carlo methods (SMC or particle filters) provide an asymptotically unbiased method for sequentially sampling from arbitrarily complex distributions.  When targets follow linear Gaussian dynamics, a Rao-Blackwellized particle filter may be used to sample the measurement associations allowing sufficient statistics for distributions over individual target states to be computed in closed form (by Kalman filtering, see Algorithm~\ref{alg:SIS} in the Appendix for further details) \cite{sarkka2004rao}.  The proposal distribution is a primary limitation when using Monte Carlo methods.  Ideally it should match the target distribution as closely as possible, but this generally makes it computationally unwieldy.  

In the case of a Rao-Blackwellized particle filter for multi-target tracking, the optimal proposal distribution \citep[p.~199]{doucet2000sequential} that minimizes the variance of each importance weight is a distribution over permutations defined by a matrix permanent (please see Section~\ref{sec:optimal_proposal_distribution} in the Appendix for further details).  We implemented a Rao-Blackwellized particle filter that uses the optimal proposal distribution.  We evaluated it's effectiveness against a Rao-Blackwellized particle filter using a sequential proposal distribution \cite{sarkka2004rao} \jonathan{check me} and against the standard multiple hypothesis tracking framework (MHT) \cite{reid1979algorithm,chong2018forty,kim2015multiple}.  

Our work can be extended to deal with a variable number of targets and clutter measurements using a matrix formulation similar to that in \cite{atanasov2014semantic}. \jonathan{maybe check/elaborate}

\subsection{Optimal Single-Target Bayesian Filtering}
\jonathan{this section has issues with variable collisions that should be resolved, or explain to the reader that x and y refer to single target states in this section only or something.  Add citations HMM, Kalman filter}
In this section we give a brief review of the optimal Bayesian filter for single-target tracking.  Consider a hidden Markov model with unobserved state $\mathbf{x}_t$ and measurement $\mathbf{y}_t$ at time $t$.  The joint distribution over states and measurements factors as 
\begin{equation*}
    \Pr(\mathbf{x}_{1:T}, \mathbf{y}_{1:T}) = \Pr(\mathbf{x}_{1}) \Pr(\mathbf{y}_1 | \mathbf{x}_{1}) \prod_{t=2}^{T} \Pr(\mathbf{x}_{t} | \mathbf{x}_{t-1}) \Pr(\mathbf{y}_t | \mathbf{x}_{t})
\end{equation*}
by the Markov property.  This factorization of the joint distribution facilitates Bayesian filtering, a recursive algorithm that maintains a fully Bayesian distribution over the hidden state $\mathbf{x}_{t}$ as each measurement $\mathbf{y}_{t}$ is sequentially observed.  Given the prior distribution $p(\mathbf{x}_{1})$ over the initial state, the Bayesian filter consists of the update step\footnote{Where we have abused notation and the initial distribution is $\Pr(\mathbf{x}_{1} | \mathbf{y}_{1:0}) = \Pr(\mathbf{x}_{1})$.}
\begin{equation*}
    \Pr(\mathbf{x}_{t} | \mathbf{y}_{1:t}) = \frac{\Pr(\mathbf{y}_{t} | \mathbf{x}_{t}) \Pr(\mathbf{x}_{t} | \mathbf{y}_{1:t-1})}{\int \Pr(\mathbf{y}_{t} | \mathbf{x}_{t}) \Pr(\mathbf{x}_{t} | \mathbf{y}_{1:t-1}) d\mathbf{x}_{t}}
\end{equation*}
and the prediction step
\begin{equation*}
    \Pr(\mathbf{x}_{t} | \mathbf{y}_{1:t-1}) = \int \Pr(\mathbf{x}_{t} | \mathbf{x}_{t-1}) \Pr(\mathbf{x}_{t-1} | \mathbf{y}_{1:t-1}) d\mathbf{x}_{t-1}.
\end{equation*}

In the special case of linear Gaussian models where the state transition and measurement processes are linear but corrupted with Gaussian noise, the above integrals can be computed analytically giving closed form update and predict steps.  The distribution over the hidden states remains Gaussian and is given by the Kalman filter \jonathan{add citation, possibly give formulas in appendix?} with update step
\begin{equation} \label{eq:kf_update}
    \Pr(\mathbf{x}_{t} | \mathbf{y}_{1:t}) = \mathcal{N}(\hat{\mathbf{x}}_{t|t}, \mathbf{P}_{t|t})
\end{equation}
and prediction step
\begin{equation} \label{eq:kf_predict}
    \Pr(\mathbf{x}_{t} | \mathbf{y}_{1:t-1}) = \mathcal{N}(\hat{\mathbf{x}}_{t|t-1}, \mathbf{P}_{t|t-1}).
\end{equation}

\subsection{Optimal Multi-Target Bayesian Filtering}

In this section we give a brief review of the optimal Bayesian filter for multi-target tracking problem with a fixed cardinality (fixed number of targets and measurements over time) \citep[pp.~485-486]{oh2009markov} and its computational intractability. \jonathan{Move me: We then show how to perform sequential Monte Carlo (SMC) on the distribution defined by the optimal Bayesian filter.  Using our sampling strategy from the previous sections we can employ the optimal proposal distribution} 

\jonathan{we approximate importance weights, so are we doing approximate SMC or what should this be called? also in experiments we should probably run SMC with a different, e.g. sequential, importance distribution}

Given standard multi-target tracking assumptions\jonathan{independence of target motion, independence measurement error between targets, uniform distribution over measurement target associations, markov assumption, etc.}, the joint distribution over all target states $X$, measurements $Y$, and measurement-target associations $\pi$ can be factored as\footnote{For a tracking sequence of $K$ targets over $T$ time steps, $X$ is an array where row $X_t = (X_t^1, \dots, X_t^K)$ represents the state of all targets at time $t$ and element $X_t^k$ is a vector representing the state of the $k^{\text{th}}$ target at time $t$.  Likewise $Y$ is an array where row $Y_t = (Y_t^1, \dots, Y_t^K)$ represents all measurements at time $t$ and element $Y_t^k$ is a vector representing the $k^{\text{th}}$ measurement at time $t$.  Measurement-target associations are represented by the array $\pi$ where the element $\pi_t \in S_k$ is a permutations of $\{1,2,\dots,k\}$ ($S_k$ denotes the symmetric group).}

\begin{equation}
\begin{aligned}
	\Pr(X, Y, \pi) & = \Pr(X_1) \Pr(\pi_1) \Pr(Y_1 | X_1, \pi_1) \\
	&\times \prod_{t=2}^T  \Pr(X_t | X_{t-1}) \Pr(\pi_t) \Pr(Y_t | X_t, \pi_t). \\
\end{aligned}
\end{equation}

The optimal Bayesian filter for multi-target tracking is a recursive algorithm, similar to the standard Bayesian filter in the single target tracking setting, that maintains a distribution over the joint state of all targets by incorporating new measurement information as it is obtained.  It is more complex than the single target Bayesian filter because it must deal with uncertainty in measurement-target association.  As in the single target tracking setting the filter is composed of prediction and update steps. The prediction step is 

\begin{equation}
\begin{aligned}
	&\Pr(X_t | Y_{1:t-1})  \\
	= &\sum_{\pi_{1:t-1}} \Pr(X_t | Y_{1:t-1}, \pi_{1:t-1}) \Pr(\pi_{1:t-1} | Y_{1:t-1}) \\
	= &\frac{1}{k!^{t-1}}\sum_{\pi_{1:t-1}} \Pr(X_t | Y_{1:t-1}, \pi_{1:t-1}) \\
	= &\frac{1}{k!^{t-1}}\sum_{\pi_{1:t-1}} \Pr((X_t^1, \dots, X_t^K) | Y_{1:t-1}, \pi_{1:t-1}) \\
	= &\frac{1}{k!^{t-1}}\sum_{\pi_{1:t-1}} \int \dots \int \Pr(X_t^1 | X_{t-1}^1) \Pr(X_{t-1}^1 | Y_{1:t-1}, \pi_{1:t-1}) \\
	&  \times \Pr(X_t^K | X_{t-1}^K) \Pr(X_{t-1}^K | Y_{1:t-1}, \pi_{1:t-1}) dX_{t-1}^1 \dots dX_{t-1}^K\\
	= &\frac{1}{k!^{t-1}}\sum_{\pi_{1:t-1}} \int  \Pr(X_t^1 | X_{t-1}^1) \Pr(X_{t-1}^1 | Y_{1:t-1}, \pi_{1:t-1}) dX_{t-1}^1 \\
	& \times \dots \times \\
	& \int  \Pr(X_t^K | X_{t-1}^K) \Pr(X_{t-1}^K | Y_{1:t-1}, \pi_{1:t-1}) dX_{t-1}^K. \\
\end{aligned}
\end{equation}
The update step is 
\begin{equation}
\begin{aligned}
	&\Pr(X_t | Y_{1:t})  \\
	= &\sum_{\pi_{1:t}} \Pr(X_t | Y_{1:t}, \pi_{1:t}) \Pr(\pi_{1:t} | Y_{1:t}) \\
	= &\frac{1}{k!^{t}} \sum_{\pi_{1:t}} \Pr(X_t | Y_{1:t}, \pi_{1:t}) \\
	= &\frac{1}{k!^{t}} \sum_{\pi_{1:t}} \frac{\Pr(Y_t | X_t, \pi_{t}) \Pr(X_t | Y_{1:t-1}, \pi_{1:t-1})}{\int \Pr(Y_t | X_t, \pi_{t}) \Pr(X_t | Y_{1:t-1}, \pi_{1:t-1}) dX_t}  \\
\end{aligned}
\end{equation}

Unfortunately the multi-target optimal Bayesian filtering steps outlined above are computationally intractable to compute.  Even in special cases where the integrals are tractable, such as for linear Gaussian models, summation over $k!^t$ states is required.

\subsection{Sequential Monte Carlo}

Sequential Monte Carlo (SMC) or particle filtering methods can be used to sample from sequential models \jonathan{cite}.  These methods can be used to sample from the distribution defined by the optimal Bayesian multi-target filter.  When target dynamics are linear Gaussian a Rao-Blackwellized particle filter can be used to sample measurement-target associations and compute sufficient statistics for individual target distributions in closed form \cite{sarkka2004rao}.

Pseudo-code for Rao-Blackwellized sequential importance sampling is given in algorithm~\ref{alg:SIS}.  We use $KF_u(\cdot)$ and $KF_p(\cdot)$ to denote calculation of the closed form Kalman filter update and prediction steps given in equations~\ref{eq:kf_update} and \ref{eq:kf_predict} respectively.

\tiny
\begin{algorithm}\captionsetup{labelfont={sc,bf}, labelsep=newline}
  \caption{Rao-Blackwellized Sequential Importance Sampling} \label{alg:SIS}
  \jonathan{check for variable collisions in this algorithm.  should we have this pseudocode or just reference another paper?}
\begin{flushleft}

  \textbf{Outputs:} $N$ importance samples  $\pi_{1:T}^{(i)} \sim \Pr(\pi_{1:T} | Y_{1:T})$ and weights $w_T^{(i)}$ ($i \in {1, 2, \dots, n}$) with corresponding state estimates $\hat{X}_{1:T}^{(i)}$ and covariance matrices $P_{1:T}^{(i)}$.  Note $\hat{X}_{1:T}^{(i)}$ and $P_{1:T}^{(i)}$ are both arrays; $\hat{X}_t^{k(i)}$ is the $k^{\text{th}}$ target's estimated state vector at time $t$ for sample $i$.
  
\end{flushleft}
  
\begin{algorithmic}[1]
\FOR[Update particle at time $t$]{t = 1, \dots, T} 
    \FOR[Sample particle i]{i = 1, \dots, N} 
        \STATE $\pi_t^{(i)} \sim q(\pi_t | \pi_{1:t-1}^{(i)}, Y_{1:t})$
        \STATE $\pi_{1:t}^{(i)} \gets \left(\pi_{1:t-1}^{(i)}, \pi_{t}^{(i)}\right)$
        \FOR[Iterate over targets]{k = 1, \dots, K} 
            \STATE $\hat{X}_{t|t}^{k (i)}$, $P_{t|t}^{k (i)} \gets KF_u \left( \hat{X}_{t|t-1}^{k (i)}, P_{t|t-1}^{k (i)}, Y_t^{\pi(k)} \right)$ 
            \STATE $\hat{X}_{t+1|t}^{k (i)}$, $P_{t+1|t}^{k (i)} \gets KF_p \left( \hat{X}_{t|t}^{k (i)}, P_{t|t}^{k (i)} \right)$ 
            \STATE $\hat{X}_{1:t}^{(i)} \gets \left(\hat{X}_{1:t-1}^{(i)}, \hat{X}_{t}^{(i)}\right)$
            \STATE $P_{1:t}^{(i)} \gets \left(P_{1:t-1}^{(i)}, P_{t}^{(i)}\right)$
        \ENDFOR
        \STATE $w_t^{*(i)} \gets w_{t-1}^{*(i)} \frac{\prod_{k=1}^K P\left(Y_{1:t}^{\pi_t(k)} | \hat{X}_{t|t-1}^{k(i)}, P_{t|t-1}^{k(i)}\right)}{q(\pi_t | \pi_{1:t-1}^{(i)}, Y_{1:t})}$
    \ENDFOR
    \FOR[Normalize importance weights]{i = 1, \dots, N} 
        \STATE $\Tilde{w}_t^{(i)} \gets \frac{ w_t^{*(i)}}{\sum_{j=1}^N w_t^{*(j)}}$
    \ENDFOR
    \STATE \jonathan{resample if switch to SIR, otherwise mention elsewhere}
\ENDFOR

\end{algorithmic}
\end{algorithm}
\normalsize

\subsection{Optimal Proposal Distribution} \label{sec:optimal_proposal_distribution}

While SMC methods are asymptotically unbiased, their performance depends on the quality of the proposal distribution.  The optimal proposal distribution that minimizes the variance of importance weight $w_t^{*(i)}$ \citep[p.~199]{doucet2000sequential} is $q(x_t | x_{1:t-1}^{(i)}, Y_{1:t}) = \Pr(x_t | x_{t-1}^{(i)}, Y_{t})$\jonathan{where i have just abused notation with $x_t$}.  In our setting we have hidden variables $X$ and $\pi$, so we need to rewrite this as $q(X_t, \pi_t | X_{1:t-1}^{(i)}, \pi_{1:t-1}^{(i)}, Y_{1:t}) = \Pr(X_t, \pi_t | X_{t-1}^{(i)}, \pi_{t-1}^{(i)}, Y_{t}) = \Pr(X_t, \pi_t | X_{t-1}^{(i)}, Y_{t})$ (note that $X_t$ and $\pi_t$ are conditionally independent from $\pi_{t-1}^{(i)}$ given $X_{t-1}^{(i)}$).  Using Rao-Blackwellization we avoid sampling $X_t$ but instead compute sufficient statistics (mean and covariance) in closed form, so we have that the optimal proposal distribution is

 \begin{equation} \label{eq:permanent_for_tracking}
    \begin{aligned}
          & q(\pi_t | X_{1:t-1}^{(i)}, \pi_{1:t-1}^{(i)}, Y_{1:t})  \\
        = & \Pr(\pi_t | \hat{X}_{t|t-1}^{(i)}, P_{t|t-1}^{(i)}, \pi_{1:t-1}^{(i)}, Y_{1:t})  \\
        = & \Pr(\pi_t | \hat{X}_{t|t-1}^{(i)}, P_{t|t-1}^{(i)}, Y_{1:t})  \\
        = & \frac{\Pr(\pi_t, \hat{X}_{t|t-1}^{(i)}, P_{t|t-1}^{(i)}, Y_{1:t})}{\sum_{\pi_t}\Pr(\pi_t, \hat{X}_{t|t-1}^{(i)}, P_{t|t-1}^{(i)}, Y_{1:t})}  \\
        = & \frac{\Pr(Y_{1:t} | \pi_t, \hat{X}_{t|t-1}^{(i)}, P_{t|t-1}^{(i)}) \Pr(\hat{X}_{t|t-1}^{(i)}, P_{t|t-1}^{(i)} | \pi_t) \Pr(\pi_t)}{\sum_{\pi_t}\Pr(Y_{1:t} | \pi_t, \hat{X}_{t|t-1}^{(i)}, P_{t|t-1}^{(i)}) \Pr(\hat{X}_{t|t-1}^{(i)}, P_{t|t-1}^{(i)} | \pi_t) \Pr(\pi_t)}  \\
        = & \frac{\Pr(Y_{1:t} | \pi_t, \hat{X}_{t|t-1}^{(i)}, P_{t|t-1}^{(i)}) \Pr(\hat{X}_{t|t-1}^{(i)}, P_{t|t-1}^{(i)}) \Pr(\pi_t)}{\sum_{\pi_t}\Pr(Y_{1:t} | \pi_t, \hat{X}_{t|t-1}^{(i)}, P_{t|t-1}^{(i)}) \Pr(\hat{X}_{t|t-1}^{(i)}, P_{t|t-1}^{(i)}) \Pr(\pi_t)}  \\
        = & \frac{\Pr(Y_{1:t} | \pi_t, \hat{X}_{t|t-1}^{(i)}, P_{t|t-1}^{(i)}) / k!} {\sum_{\pi_t}\Pr(Y_{1:t} | \pi_t, \hat{X}_{t|t-1}^{(i)}, P_{t|t-1}^{(i)}) / k!}  \\
        = & \frac{\prod_{k=1}^K \Pr(Y_{1:t}^{\pi_t(k)} | \hat{X}_{t|t-1}^{k(i)}, P_{t|t-1}^{k(i)})} {\sum_{\pi_t}\prod_{k=1}^K \Pr(Y_{1:t}^{\pi_t(k)} | \hat{X}_{t|t-1}^{k(i)}, P_{t|t-1}^{k(i)})}.  \\
    \end{aligned}
\end{equation}
Note that the denominator of the final line in equations~\ref{eq:permanent_for_tracking} is the permanent of matrix $A$, where $(a_{jk}) = \Pr(Y_{1:t}^{j} | \hat{X}_{t|t-1}^{k(i)}, P_{t|t-1}^{k(i)})$.  Using the machinery developed throughout this paper we can sample from the optimal proposal distribution and compute approximate importance weights \jonathan{say more, how do we deal with this exactly?}.



\end{document}